%% file: main.tex
\documentclass[letterpaper]{article}
\usepackage[preprint]{aaai2027}

\usepackage[hyphens]{url}
\usepackage{graphicx}
\usepackage{natbib}
\usepackage{caption}
\usepackage{amsmath}
\usepackage{amssymb}
\usepackage{array}
\usepackage{booktabs}
\usepackage{multirow}
\newtheorem{proposition}{Proposition}
\newtheorem{corollary}{Corollary}

\usepackage{algorithm}
\usepackage{algorithmic}

\title{HeadWiseKV: Budgeted Per-Head Cache Residency for Hybrid Long-Context Language Models}

\author{
    Renjie Xie\textsuperscript{\rm 1,\rm 2}\equalcontrib,
    Juncheng Yang\textsuperscript{\rm 2}\equalcontrib,
    Aoting Hu\textsuperscript{\rm 3,\rm 2}\equalcontrib,\\
    Mingxi Zhang\textsuperscript{\rm 1,\rm 2},
    Liyao Wu\textsuperscript{\rm 1,\rm 2},
    Zheheng Hong\textsuperscript{\rm 4,\rm 2},
    Wei Xu\textsuperscript{\rm 5}
}
\affiliations{
    \textsuperscript{\rm 1}Nanjing University of Posts and Telecommunications\\
    \textsuperscript{\rm 2}Tylogi AI Lab / TAIL\\
    \textsuperscript{\rm 3}Anhui University of Technology\\
    \textsuperscript{\rm 4}Shanghai Jiao Tong University\\
    \textsuperscript{\rm 5}School of Information Science and Engineering, Southeast University\\
    renjie\_xie@njupt.edu.cn, juncheng\_yang@tylogi.com,
    aotinghu@ahut.edu.cn, b23080111@njupt.edu.cn,
    lywu@tylogi.com, zh2227@sjtu.edu.cn, wxu@seu.edu.cn
}

\begin{document}

\maketitle

\begin{abstract}
Long-context inference retains a growing key--value (KV)
cache during decoding, which consumes substantial GPU memory and can reduce
generation throughput. This bottleneck remains in hybrid language
models because their residual global-attention layers can dominate
context-dependent cache demand. We study how to allocate
this state under an aggregate KV-residency budget. We introduce HeadWiseKV, a
training-free framework that compresses the residual global KV
caches of hybrid language models while preserving their native local,
recurrent, and linear paths. It assigns each physical KV head a static,
multilevel history window, making cache demand predictable before serving. We
formulate this allocation as a restricted operational rate--distortion problem
and propose SeqCalib as the core policy-generation algorithm in HeadWiseKV.
SeqCalib processes layers in execution order and conditions each decision on
the lower-layer policy used at deployment, thereby accounting for interactions
across depth. A grouped-cache runtime materializes the selected policy as
actual per-head KV residency rather than a mask over a full cache. We evaluate
downstream quality across four hybrid long-context models and study physical
residency and serving behavior on Qwen3.6-27B. HeadWiseKV retains near-Full-KV
RULER and LoCoMo quality across the evaluated models. In the fixed-model
systems study, it reduces sampled peak device memory by 8.59\% at a 112K
context length and extends the largest verified successful context from 114K
to 161K.
\end{abstract}

\input{sections/introduction}
\input{sections/related_work}
\input{sections/method}
\input{sections/evaluation}
\input{sections/conclusion}

\bibliography{references}

% The AAAI supplementary document is merged into the public preprint so that
% all proofs, implementation details, additional results, and limitations are
% available in one PDF and use the same cross-reference namespace.
\clearpage
\appendix
\input{sections/theory_appendix}
\input{sections/evaluation_details}
\input{sections/qwen35_9b_throughput_figure}
\input{sections/runtime_implementation}
\input{sections/longmemeval_results}
\input{sections/limitations}

\end{document}

%% file: sections/introduction.tex
\section{Introduction}

During long-context autoregressive inference, each full-attention layer stores
key and value vectors for every cached token. This KV cache grows with context
length and concurrency and can exhaust device memory \citep{h2o}. Hybrid
language models reduce the burden but do not eliminate it.
Gemma~2 and Gemma~3 interleave global and local sliding-window attention
\citep{gemma2,gemma3}, whereas Qwen3-Next and Qwen3.6 interleave full or gated
attention with recurrent Gated-DeltaNet blocks
\citep{qwen3next_blog,qwen36_27b,gated_deltanet}. Their remaining global blocks
still retain histories that grow with the prompt. Hybridization therefore
relocates the KV bottleneck rather than eliminating it: a small set of residual
global layers can determine whether a long request fits in memory.

KV compression involves two distinct choices: when the
retention policy is determined and how that policy is realized in memory.
Prompt-dependent methods derive token or head decisions from the current
request, observed attention, or online eviction state
\citep{fastgen,adakv,headkv,kvcompress,diffkv,cake}. This adaptation can
preserve isolated distant evidence that is important to a particular prompt.
Its systems benefit, however, depends on the runtime. A logical sparse mask can
reduce attention work while leaving the allocated KV cache unchanged.
Compaction or eviction can reduce decoding-time residency, but a method that
first constructs a full cache need not reduce the prefill memory peak. Online
scoring, indexing, and KV movement also add request-time overhead, so sparse
attention does not automatically imply higher end-to-end throughput.

Static policies make the complementary trade-off.
Capacities fixed before prefill allow a runtime to allocate bounded storage
directly, yielding predictable memory demand without request-time selection.
The policy cannot inspect the current prompt,and therefore cannot
identify which isolated past tokens a future query may need. Existing static
methods mitigate this limitation by assigning full history to retrieval heads
and compressed or streaming caches to other heads
\citep{razorattention,duoattention}. This binary split is regular to deploy but
can be too coarse for hybrid models whose residual global heads require
different amounts of history. Table~\ref{tab:closest-work} separates policy
timing from physical cache realization and summarizes this trade-off.

Static allocation also creates a calibration problem across
depth. Sensitivity varies by head and layer
\citep{fastgen,razorattention,headkv,diffkv,squeezeattention,pyramidkv,d2o,cake}.
A common window can waste memory on tolerant heads while truncating sensitive
ones. Moreover, shortening an early cache changes the representations consumed
by later layers. A higher layer calibrated under an all-full prefix may
therefore see different inputs after deployment. A deployable static policy
must choose fine-grained per-head capacities under the lower-layer decisions
that will actually be active, and the runtime must realize those capacities as
physical storage.

We introduce \emph{HeadWiseKV} to meet these requirements.
It assigns each residual global KV head one of several contiguous history
lengths offline and keeps the resulting policy fixed during serving. This
multilevel allocation is more expressive than binary head specialization while
retaining a regular, prompt-independent layout. Its runtime stores only the
selected histories, so the policy determines physical residency rather than
logical access to an all-full cache. HeadWiseKV thus exchanges request-specific
token selection for a fine-grained memory plan that is known before prefill.

HeadWiseKV connects three technical pieces in an offline-to-online pipeline.
A structured residency model first defines the candidate history lengths and
their storage cost. At the core of HeadWiseKV is \emph{SeqCalib}, the offline
algorithm that selects one history length for every configurable KV head. For
each layer, SeqCalib keeps the previously selected lower-layer windows active
and compares candidate pre-gate attention outputs with a conditional
full-history reference under that same lower-layer policy. It chooses the
lowest-cost codebook entry meeting a mean-cosine threshold. For the realized
lower prefix, this finite search is stage-wise exact, not a guarantee of joint
global optimality. Running SeqCalib over a finite set of thresholds produces
static policy matrices spanning different residency costs. A deterministic
budget selector then chooses among the feasible matrices. A grouped cache
runtime loads the chosen matrix once and allocates the corresponding physical
history for each head. In short, the residency model defines the choices,
SeqCalib decides what to retain, and the runtime makes that decision real.
None of these steps requires retraining or architectural changes.
We cast the finite allocation as a \emph{restricted operational rate distortion
problem} \citep{shannon_fidelity,cover_thomas}. Its stage-wise guarantee does
not establish globally optimal joint allocation or downstream quality.

\begin{figure}[t]
    \centering
    \includegraphics[width=\columnwidth]{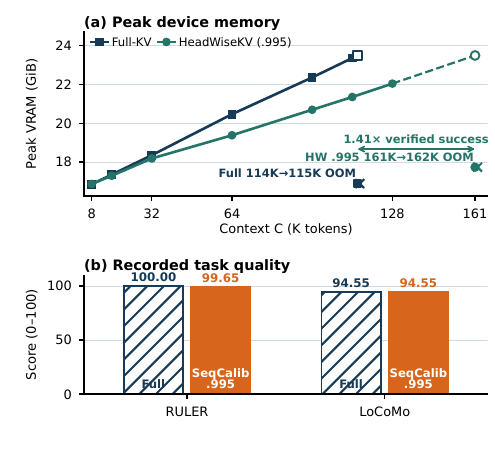}
    \caption{{\textbf{Qwen3.6-27B memory, capacity, and quality summary.}
    HeadWiseKV lowers sampled peak device memory and expands the verified
    context range while preserving long-context quality. Panel (a) combines
    matched memory measurements with
    independent adjacent-grid capacity brackets; panel (b) reports recorded
    RULER and LoCoMo scores. The systems and quality panels use separately
    recorded \(\tau=.995\) operating points and do not form a joint
    quality--systems result.}}
    \label{fig:qwen27-summary}
\end{figure}

\newcommand{\placeclosestworktable}{%
\begin{table}[t]
\centering
\scriptsize
\setlength{\tabcolsep}{0.8pt}
\renewcommand{\arraystretch}{1}
\caption{\textbf{Mechanistic comparison of KV-cache methods.}
We compare Ada-KV + SnapKV \citep{adakv,snapkv}, StreamingLLM
\citep{streamingllm}, DuoAttention \citep{duoattention}, HeadKV
\citep{headkv}, RazorAttention \citep{razorattention}, and KV-Compress
\citep{kvcompress}. ``Offline capacities'' means that cache lengths are fixed
before requests. ``Prompt-independent retention'' means
that retention does not use request content or observed attention.
``Physical before prefill'' means
that the runtime reduces allocated storage before constructing the cache.
\(\checkmark\) denotes yes, \(\circ\) denotes partial, and \(\times\) denotes
no, --: not documented.}
\label{tab:closest-work}
\resizebox{\columnwidth}{!}{%
\begin{tabular}{@{}lccccc@{}}
\toprule
& \multicolumn{2}{c}{Policy}
& \multicolumn{2}{c}{Allocation}
& \multicolumn{1}{c}{Runtime} \\
\cmidrule(lr){2-3}\cmidrule(lr){4-5}\cmidrule(l){6-6}
Method
& \shortstack{Offline\\cache\\capacities}
& \shortstack{Prompt-\\independent\\retention}
& \shortstack{Multilevel\\per\\head}
& \shortstack{Deployment-\\conditioned\\calib.}
& \shortstack{Physical\\before\\prefill} \\
\midrule
Ada-KV + SnapKV      & \(\times\)     & \(\times\) & \(\checkmark\) & \(\times\) & \(\times\) \\
StreamingLLM & \(\checkmark\) & \(\checkmark\) & \(\times\) & \(\times\) & \(\checkmark\) \\
DuoAttention         & \(\checkmark\) & \(\checkmark\) & \(\times\)     & \(\times\) & \(\checkmark\) \\
HeadKV               & \(\checkmark\) & \(\times\) & \(\checkmark\) & \(\times\) & -- \\
RazorAttention       & \(\checkmark\) & \(\checkmark\) & \(\circ\)      & \(\times\) & \(\checkmark\) \\
KV-Compress          & \(\times\)     & \(\times\) & \(\checkmark\) & \(\times\) & \(\times\) \\
\midrule
\textbf{HeadWiseKV} & \(\boldsymbol{\checkmark}\) & \(\boldsymbol{\checkmark}\) & \(\boldsymbol{\checkmark}\) & \(\boldsymbol{\checkmark}\) & \(\boldsymbol{\checkmark}\) \\
\bottomrule
\end{tabular}}
\end{table}
}

{Figure~\ref{fig:qwen27-summary} summarizes the central
advantage of HeadWiseKV: it lowers device-memory demand and supports longer
contexts while preserving long-context quality.  On Qwen3.6-27B
\citep{qwen36_27b}, HeadWiseKV reduces sampled peak device memory by 8.59\% at
112K tokens and increases the largest verified successful context from 114K
to 161K, while retaining near-Full-KV performance on RULER and LoCoMo.  The
systems and quality panels report separately recorded operating points; full
protocols and results appear in Section~\ref{sec:evaluation}.}

\noindent\begin{minipage}{\columnwidth}
Our contributions are:
\begin{itemize}
    \item {a budgeted physical-residency formulation for the
    residual global KV heads of hybrid models, together with an analysis
    motivating nonuniform allocation and deployment-conditioned calibration;}
    \item {HeadWiseKV and its prefix-conditioned SeqCalib
    algorithm, which construct static, multilevel per-head suffix policies
    under a prescribed KV-residency budget; and}
    \item {a grouped physical per-head cache runtime that
    materializes the selected policy as actual cache residency, together with
    an evaluation across model families, compression budgets, and long-context
    workloads.}
\end{itemize}
\end{minipage}

%% file: sections/related_work.tex
\section{Related Work}

%\paragraph{KV retention policies.}
Prior work varies along two independent axes. Policy timing
determines whether retention is fixed offline or selected from the current
request. Cache realization determines whether that choice changes physical
residency or only sparse attention access. Request-dependent token selection
uses attention, position, or prompt-specific signals
\citep{scissorhands,h2o,snapkv,sagekv}. Head- and layer-aware
methods further vary budgets across the model
\citep{fastgen,razorattention,adakv,headkv,duoattention,kvcompress,diffkv,
squeezeattention,pyramidkv,d2o,cake}. A closely related direction reduces the
context consulted by each query. QUEST selects KV pages using query-dependent
bounds, while TokenSelect performs dynamic token-level selection across heads
\citep{quest,tokenselect}. InfLLM retrieves relevant blocks from an auxiliary
context memory, and PyramidInfer reduces retained tokens according to
layer-wise attention consistency \citep{infllm,pyramidinfer}. Together, these
methods adapt token or block access to the input or query. Their physical
memory and latency effects still depend on whether the runtime masks,
retrieves, compacts, or evicts the selected states.

In contrast, static methods move retention decisions before
the request. StreamingLLM fixes a sink--recent policy, while RazorAttention
and DuoAttention use offline head roles that primarily expose a binary choice
between full and compressed histories
\citep{streamingllm,razorattention,duoattention}. HeadWiseKV instead fixes a
multilevel suffix length for each physical KV head, calibrates higher layers
under the selected lower-layer policy, and allocates the resulting capacities
before prefill. Table~\ref{tab:closest-work} compares these mechanisms. The
``prompt-independent retention'' and ``physical before prefill'' columns
deliberately separate policy dependence from storage realization. The
``deployment-conditioned calibration'' column captures whether higher-layer
decisions are made under the lower-layer policy used at deployment. A
checkmark denotes a documented property, not an assumption that the property
is universally preferable.

This retention policy is orthogonal to cache precision and
serving-time memory management, which reduce representation cost or manage
allocation without choosing a history length for each head
\citep{kivi,kvquant,pagedattention,vattention}. HeadWiseKV specifically
targets the residual global-attention KV histories of hybrid models and leaves
their bounded-memory local or recurrent paths unchanged.

% Queue the two-column method overview before the Method section so that it can
% occupy the next page top rather than drift behind the section it introduces.

\begin{figure*}[t]
    \centering
    \includegraphics[width=\textwidth]{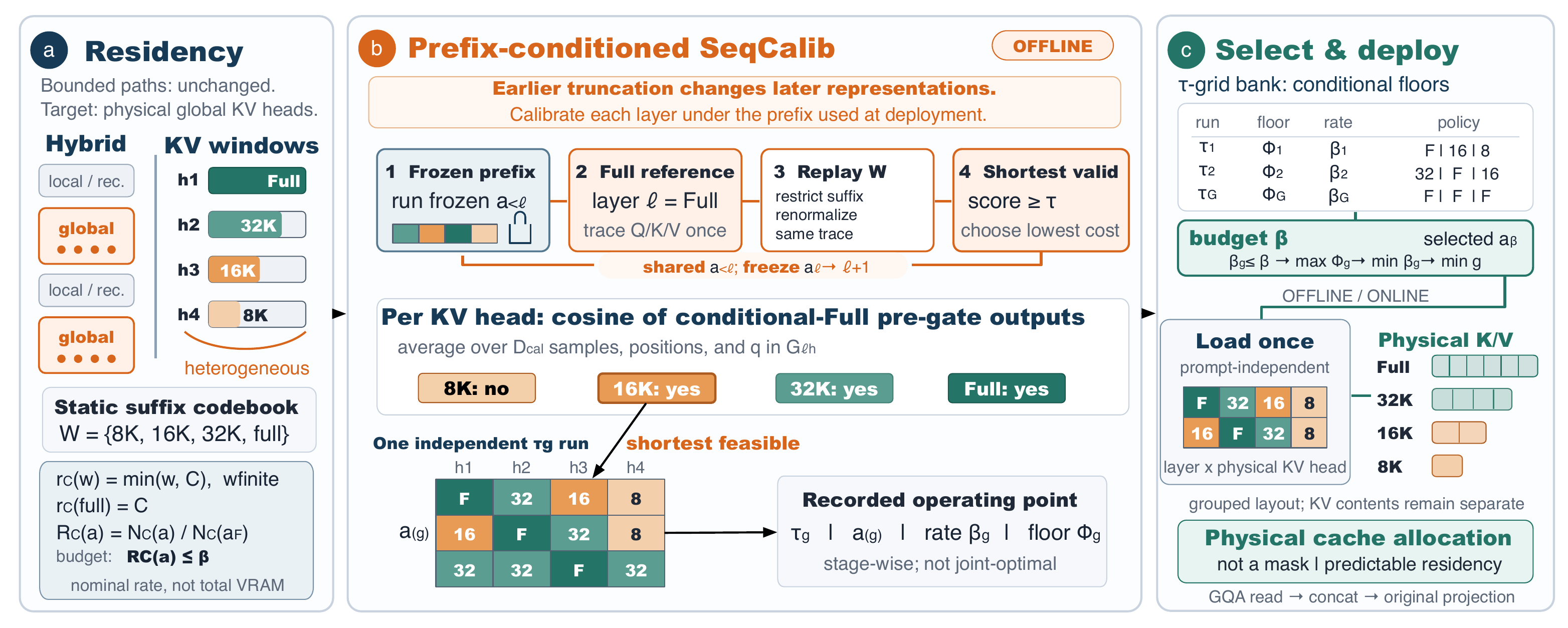}
    \caption{\textbf{Overview of HeadWiseKV.} Residual global-attention layers can have
    heterogeneous history requirements across KV heads. Offline SeqCalib
    evaluates a finite suffix codebook one layer at a time under the frozen
    lower-layer policy and produces a static policy matrix. The runtime loads
    this matrix once and materializes grouped per-head caches with different
    physical lengths. Window values are illustrative.}
    \label{fig:headwisekv-overview}
\end{figure*}

%% file: sections/method.tex
\section{Method}
\label{sec:method}

Figure~\ref{fig:headwisekv-overview} summarizes the offline-to-online pipeline.
HeadWiseKV defines feasible per-head history lengths and their storage costs,
uses SeqCalib to select a layer-conditioned policy, and realizes that policy as
physical cache residency. The next subsections formalize these stages.

\subsection{Structured Residency Model}

We begin by formalizing the static per-head residency problem.
At a fixed context length, the policy decides how much recent history each
configurable KV head retains. Only residual full-attention layers participate:
\(\mathcal L_{\mathrm F}\) lists these layers in forward execution order, and
\(\mathcal H_\ell\) contains the physical KV heads in layer \(\ell\). Under
grouped-query attention (GQA), several query heads may share one KV head. We denote this group by
\(\mathcal G_{\ell h}\). The decision unit is therefore a physical KV head
\((\ell,h)\), not an individual query head. The set of all decision units is
\(\mathcal U=\{(\ell,h):\ell\in\mathcal L_{\mathrm F},
h\in\mathcal H_\ell\}\), and HeadWiseKV assigns
\begin{equation}
 a_{\ell h}\in\mathcal W=\{w_1,\ldots,w_K,\mathrm{full}\},
\end{equation}
where each finite \(w_k\) denotes a contiguous suffix length. A single allocation
is shared by all query heads in \(\mathcal G_{\ell h}\) because they read the
same stored keys and values. For example, with
\(\mathcal W=\{8\mathrm K,16\mathrm K,32\mathrm K,\mathrm{full}\}\),
\(a_{\ell h}=8\mathrm K\) instructs KV head \(h\) in layer \(\ell\) to retain
its latest 8K KV states, while another head may retain 32K or full history. The
complete policy is thus a layer-by-KV-head matrix of history lengths. Direct
calibration assumes \(0<w_1<\cdots<w_K<C\). Any \(w\ge C\) is equivalent to
\(\mathrm{full}\). The order \(\prec_{\mathcal W}\), increasing finite windows
followed by \(\mathrm{full}\), resolves cost ties.

\placeclosestworktable

Storage grows linearly with the retained history. At context length
\(C\in\mathbb N_{>0}\), \(r_C(w)=\min(w,C)\) gives the retained token count for
a finite window, while \(r_C(\mathrm{full})=C\). Thus, at \(C=32\mathrm K\),
an 8K assignment stores one quarter as many KV elements as a full assignment
for the same head. The stored key/value dimensions
\(d^K_{\ell h},d^V_{\ell h}\) and effective precisions
\(b^K_{p_{\mathrm{KV}}},b^V_{p_{\mathrm{KV}}}\) convert this count into nominal
storage under cache format \(p_{\mathrm{KV}}\):
\begin{align}
 n_{\ell h,C}(w;p_{\mathrm{KV}})
   &=r_C(w)(d^K_{\ell h}b^K_{p_{\mathrm{KV}}}
                    +d^V_{\ell h}b^V_{p_{\mathrm{KV}}}),\\
 N_C(a;p_{\mathrm{KV}})&=\sum_{(\ell,h)\in\mathcal U}
             n_{\ell h,C}(a_{\ell h};p_{\mathrm{KV}}).
 \label{eq:nominal-storage}
\end{align}
Summing over \(\mathcal U\) gives the cost of the complete policy. When
\(p_{\mathrm{KV}}\) changes numerical execution, the full-cache reference,
compressed policy, and calibration scores use the same format. When it is used
only for accounting, it affects only the cost above. We normalize this total
by the cost of the all-full policy \(a^{\mathrm F}\):
\begin{equation}
 R_C(a;p_{\mathrm{KV}})
 =\frac{N_C(a;p_{\mathrm{KV}})}
        {N_C(a^{\mathrm F};p_{\mathrm{KV}})}.
 \label{eq:normalized-rate}
\end{equation}
A value \(R_C(a;p_{\mathrm{KV}})=0.5\), for example, means that the configurable
global KV heads require half the nominal storage of their all-full counterparts.
The normalized rate excludes model weights, workspaces, temporary buffers, and
allocator overhead. Section~\ref{sec:runtime} separately measures whole-device
video random-access memory (VRAM).

Let \(\mathcal X_C\) denote the input space at context length \(C\), and let
\(\mathcal Y_C\) denote the end-to-end output space on which fidelity is
measured. For any static allocation \(a\in\mathcal W^{|\mathcal U|}\), define
\(F_{a,C,p_{\mathrm{KV}}}:\mathcal X_C\to\mathcal Y_C\) as model execution
under policy \(a\) and cache format \(p_{\mathrm{KV}}\). For the all-full policy,
write \(F^{\mathrm F}_{C,p_{\mathrm{KV}}}
:=F_{a^{\mathrm F},C,p_{\mathrm{KV}}}\). Let \(P_C\) be a workload
distribution on \(\mathcal X_C\), and let
\(d_C:\mathcal Y_C\times\mathcal Y_C\to\mathbb R_{\ge0}\) be a nonnegative
end-to-end distortion measure. We abbreviate the induced per-input distortion
as \(\delta_{C,p_{\mathrm{KV}}}(a,x):=
d_C(F^{\mathrm F}_{C,p_{\mathrm{KV}}}(x),
F_{a,C,p_{\mathrm{KV}}}(x))\). Among all static matrices whose normalized
residency does not exceed a cap \(\beta\), the ideal policy minimizes expected
deviation from the full-cache model:
\begin{equation}
 D^\star_{C,P_C,p_{\mathrm{KV}}}(\beta)
 =\min_{\substack{a\in\mathcal W^{|\mathcal U|}\\
                  R_C(a;p_{\mathrm{KV}})\le\beta}}
   \mathbb E_{x\sim P_C}
   \bigl[\delta_{C,p_{\mathrm{KV}}}(a,x)\bigr].
 \label{eq:budget-rd}
\end{equation}
We set \(D^\star_{C,P_C,p_{\mathrm{KV}}}(\beta)=+\infty\) when no policy meets
the cap. Directly optimizing Eq.~\eqref{eq:budget-rd} is impractical: the search
space contains \((K+1)^{|\mathcal U|}\) policy matrices, each requiring
end-to-end evaluation. We therefore propose SeqCalib, an efficient offline
allocation algorithm that replaces this global search with conditional
per-head similarity tests. The separate supplementary document motivates
heterogeneous windows and execution-order calibration.

\subsection{Sequential Prefix-Conditioned Calibration}
\label{sec:sequential-calibration}

SeqCalib processes layers in execution order and selects one history length per
KV head while conditioning later layers on earlier decisions.
This procedure corresponds to the center panel of
Figure~\ref{fig:headwisekv-overview}: SeqCalib freezes the deployed prefix,
replays the candidate windows, and selects the shortest window that satisfies
the similarity floor.

For layer \(\ell\), \(a_{<\ell}\) denotes the fixed policy on its configurable
predecessors. SeqCalib executes this prefix, traces layer \(\ell\) once with
full history, and replays every candidate suffix from the same trace. Candidates
are thus evaluated on deployed-prefix representations without a joint search
over all heads.

The comparison is made on attention outputs. For sequence \(x\), layer
\(\ell\), physical KV head \(h\), sampled query position \(t\), query head
\(q\in\mathcal G_{\ell h}\), and candidate window \(w\), let \(s\) index a
key/value position. The available values of \(s\) are
\(\mathcal J_t(w)=\{\max(1,t-w+1),\ldots,t\}\) for finite \(w\), and
\(\mathcal J_t(\mathrm{full})=\{1,\ldots,t\}\). The traced post-transform
query, key, and value vectors are denoted by
\(\mathbf q_{\ell q,t}\), \(\mathbf k_{\ell h,s}\), and
\(\mathbf v_{\ell h,s}\), respectively. Under \(a_{<\ell}\), their attention
logit is
\(z_{\ell hqts}=\operatorname{score}_{\ell}
(\mathbf q_{\ell q,t},\mathbf k_{\ell h,s};t,s)\) using the deployed
attention-logit function. For exact scaled dot product,
\(z_{\ell hqts}=\gamma_\ell\mathbf q_{\ell q,t}^{\top}
\mathbf k_{\ell h,s}\). We define
\(\pi_{\ell hqts}^{(w)}\) as the normalized attention weight assigned to
key/value position \(s\) when the query at \(t\) attends only to candidate
window \(w\). Let
\(Z_{\ell hqt}^{(w)}=
\sum_{j\in\mathcal J_t(w)}\exp(z_{\ell hqtj})\)
denote its normalizer. The corresponding attention output is
\(\mathbf o_{\ell hq}^{(w)}(x,t\mid a_{<\ell})\):
\begin{equation}
 \pi_{\ell hqts}^{(w)}
 =\frac{\exp(z_{\ell hqts})}{Z_{\ell hqt}^{(w)}}.
 \label{eq:conditional-attention-weight}
\end{equation}
\begin{equation}
 \mathbf o_{\ell hq}^{(w)}(x,t\mid a_{<\ell})
 =\sum_{s\in\mathcal J_t(w)}
   \pi_{\ell hqts}^{(w)}\mathbf v_{\ell h,s}.
 \label{eq:conditional-attention}
\end{equation}
SeqCalib scores a candidate by how closely this output matches the conditional
full-history output. For indexed calibration data
\(\mathcal D_{\mathrm{cal},C}=(x_n)_{n=1}^{N_{\mathrm{cal}}}\), let the
sampled query positions for \(x_n\) be \(\mathcal Q_C(x_n)\), and define the
nonempty index set
\(\mathcal I_{\ell h}=\{(n,t,q):1\le n\le N_{\mathrm{cal}},
t\in\mathcal Q_C(x_n),
q\in\mathcal G_{\ell h}\}\). Abbreviate
\(\mathbf o_{n\ell hqt}^{(w)}
=\mathbf o_{\ell hq}^{(w)}(x_n,t\mid a_{<\ell})\), and let
\(c_{n\ell hqt}^{(w)}=
\operatorname{cos}_{\mathrm{eval}}
(\mathbf o_{n\ell hqt}^{(w)},
 \mathbf o_{n\ell hqt}^{(\mathrm{full})})\):
\begin{equation}
 \widehat S_{\ell h}(w\mid a_{<\ell};C,p_{\mathrm{KV}})
 =\frac{1}{|\mathcal I_{\ell h}|}
 \sum_{(n,t,q)\in\mathcal I_{\ell h}}c_{n\ell hqt}^{(w)},
 \label{eq:conditional-similarity}
\end{equation}
where
\(c_{n\ell hqt}^{(w)}\) is the individual score returned by
\(\operatorname{cos}_{\mathrm{eval}}\),
\(\widehat S_{\ell h}\) is the mean score for KV head \(h\), and
\(\widehat\Phi_g\) is the lowest selected-head mean in run \(g\).  The full
candidate has score one.
The separate supplementary document specifies stabilization, sampling, and the
trace-compatibility check.

Given a similarity floor \(\tau\in(0,1]\), SeqCalib selects the lowest-cost
window that clears the floor for each head in the current layer:
\begin{equation}
\begin{aligned}
 \widehat a_{\ell h,\tau}&\in
\arg\min_{w\in\mathcal W}
 \bigl\{n_{\ell h,C}(w;p_{\mathrm{KV}}):\\[-1mm]
 &\hspace{15mm}
 \widehat S_{\ell h}
 (w\mid\widehat a_{<\ell,\tau};C,p_{\mathrm{KV}})
 \ge\tau\bigr\}.
\end{aligned}
 \label{eq:threshold-selection}
\end{equation}
Because \(\mathrm{full}\in\mathcal W\), every stage is feasible, and
\(\prec_{\mathcal W}\) breaks equal-cost ties. All heads in a layer use the
same prefix. Freezing their selections before advancing gives minimum feasible
layer cost conditional on that realized prefix, not global optimality over
\(\mathcal W^{|\mathcal U|}\). The separate supplementary document gives the formal
proposition and proof.

One layerwise pass yields a static policy and its normalized residency:
\begin{equation}
\begin{aligned}
 \widehat a_\tau&=\operatorname{SeqCalib}
 (\mathcal D_{\mathrm{cal},C},\mathcal W,\tau,C,p_{\mathrm{KV}}),\\
 \widehat\beta_\tau&=R_C(\widehat a_\tau;p_{\mathrm{KV}}).
\end{aligned}
 \label{eq:seqcalib}
\end{equation}
To support different memory caps, we run a finite ordered grid
\(\mathcal T=(\tau_1,\ldots,\tau_G)\), with
\(\{\tau_g\}_{g=1}^G\subset(0,1]\), independently and record each policy, rate,
floor, threshold, and grid index. Among records satisfying budget \(\beta\),
the exact lexicographic rule maximizes the recorded floor, then minimizes rate,
and finally minimizes grid index. Only the listed order of \(\mathcal T\)
determines this final deterministic tie-break. Floors from different runs use
different lower prefixes and are therefore not common-reference fidelity.
rates need not be monotone in \(\tau\), so the grid is enumerated.
The separate supplementary document defines the annotated records and selector
exactly.

\subsection{From Policy to Physical Residency}
\label{sec:runtime}

We realize the selected history matrix as physical cache residency rather than
as a logical mask over an all-full allocation. The serving runtime loads the
matrix once and does not recompute it from the prompt, request history, or online
attention scores. Each configurable KV head writes to and reads from the cache
group associated with its assigned window. Heads assigned the same window may
share a group configuration, but their KV contents remain disjoint. The query
heads in a GQA group read their shared physical KV-head slice, after which the
per-head outputs are concatenated and passed through the model's unchanged
gating and output projection. Native local and recurrent layers remain on their
original cache paths.
The right panel of Figure~\ref{fig:headwisekv-overview} illustrates this
transition from the offline policy matrix to grouped physical KV allocations
in the serving runtime.

Calibration and deployment use the same layer--head indexing, window semantics,
and attention scaling. Before accepting a trace, the calibrator reconstructs a
small full-window subset to detect incompatible layouts or GQA mappings. This
check validates the execution contract, not the quality of a finite-window
policy. Because only the configured histories are allocated, the runtime reduces
the KV component counted by Eq.~\eqref{eq:nominal-storage}. Total GPU memory also
contains weights, workspaces, and allocator overhead and is therefore measured
separately. The separate supplementary document specifies serialization, graph
routing, backend configuration, and the graphics processing unit (GPU) memory
accounting boundary.

%% file: sections/evaluation.tex
\section{Evaluation}
\label{sec:evaluation}

{
We evaluate HeadWiseKV in terms of generation quality,
serving efficiency, and transfer across model families and scales. We compare
baselines and compression levels in a controlled fixed-model study, then
assess transfer on additional models.

\input{sections/qwen27_baseline_table}

\subsection{Experimental Setup and Baselines}
\label{sec:initial-config}

\paragraph{Models and cache policies.}
We evaluate four instruction-tuned hybrid models:
Qwen3.5-9B, Qwen3.6-27B~\citep{qwen36_27b}, Qwen3.6-35B-A3B, and
Gemma4-31B. Unless stated otherwise, weights use Q4\_K\_M quantization and KV
caches use 16-bit floating point. Full-KV preserves every configurable global
history and leaves native bounded-memory paths unchanged. HeadWiseKV uses
\(C_{\mathrm{cal}}=131072\) and
\(\mathcal W=\{8\mathrm K,16\mathrm K,32\mathrm K,\mathrm{full}\}\). We report
the configurable-global retention \(R_{131072}\) from
Eq.~\eqref{eq:normalized-rate}. The shared \(\tau=.995\) setting can therefore
induce different retention rates across models. SeqCalib uses a contiguous
WikiText-103 stream~\citep{wikitext} and 256 unique query positions sampled
from the final quarter of the calibration context. Calibration activates each
accepted lower-layer decision before processing the next global layer. All
calibration runs use F16 KV and full model offload. The supplementary material
provides the remaining configuration and reproducibility details.

\paragraph{Tasks and metrics.}
RULER~\citep{ruler} measures long-context retrieval over
five needle-in-a-haystack tasks at six context lengths from 4K to 128K. Each
configuration contains 1,200 examples, and we report mean item score on a
0--100 scale. LoCoMo~\citep{locomo} evaluates conversational memory on 1,540
questions from categories 1--4 with an evidence-aware correctness judge. We follow the official LoCoMo evaluation protocol, changing only the judge model to DeepSeek V4 Flash.
The Airline and Retail domains originate from
\(\tau\)-bench~\citep{yao2024taubench} and are evaluated with the
\(\tau^2\)-Bench framework~\citep{barres2025tau2}. LongMemEval-S~\citep{longmemeval} is reported in the
supplementary material. Scores are compared only within the same benchmark
and model.

\paragraph{Systems measurements.}
The fixed-model study evaluates
\(C\in\{8,16,32,64,96,112,128\}\)K on an RTX 4090 D. Each fresh-server run
receives an exact \(C-128\)-token prompt and greedily generates 128 tokens.
Every successful configuration has three repetitions. We report throughput
pooled over timed tokens and the median sampled whole-device peak memory. A
separate adjacent-grid probe identifies the last successful context and first
out-of-memory (OOM) context for Full-KV and HeadWiseKV. Contexts above 128K
are capacity probes rather than quality evaluations. The supplementary
material provides runtime flags and OOM validation procedures.

\paragraph{Baselines and comparison protocol.}
Full-KV is the uncompressed control. AdaKV~\citep{adakv}
and HeadKV-R2~\citep{headkv} provide prompt-adaptive head-level baselines,
StreamingLLM~\citep{streamingllm} provides a fixed sink--recent baseline, and
DuoAttention~\citep{duoattention} represents binary retrieval/streaming head
assignment. Only Full-KV and HeadWiseKV belong to the counterbalanced matched
systems cohort, so direct memory and throughput ratios are restricted to this
pair. Other systems campaigns are reported as absolute measurements. The
DuoAttention systems implementation uses a binary projection of the
HeadWiseKV profile. The StreamingLLM control uses a matched
\(4+90{,}364\) cache and is independently measured only after eviction
becomes active. Implementation qualifications are provided in the
supplementary material.

\paragraph{Comparison scope.}
Quality comparisons use the same model, benchmark, and
retention definition. Systems ratios are reported only for measurements from
the matched Full-KV and HeadWiseKV cohort. This separation prevents
differences in runtime implementation or measurement campaign from being
interpreted as effects of the cache policy.

\subsection{Comparison with KV-Cache Baselines}
\label{sec:baseline-comparison}

\paragraph{Downstream quality.}
We compare downstream quality at a matched 68.95\% KV-retention budget in
Table~\ref{tab:baseline-quality}. HeadWiseKV is the only compressed method
that consistently preserves Full-KV behavior across long-context retrieval
(RULER), conversational memory (LoCoMo), and agent tasks
(the Airline and Retail domains of \(\tau^2\)-Bench). AdaKV and HeadKV-R2
preserve conversational-memory quality, but their retrieval performance is
uneven, with a particularly large degradation for HeadKV-R2. StreamingLLM's
sink--recent cache retains reasonable retrieval quality but loses more on
conversational memory and both agent domains. Among the baselines,
DuoAttention achieves the highest Retail task success (82.16), but still
trails HeadWiseKV on retrieval, conversational memory, and both agent domains.
HeadWiseKV remains near Full-KV on retrieval and conversational
memory while achieving higher task success than Full-KV in both agent
domains. It suggests that allocation granularity matters:
HeadWiseKV may reduce distracting history while retaining the long-range
context needed by sensitive heads, helping the model focus on task-relevant
evidence.
%Matching the overall retention budget alone does not provide
%consistent quality across workloads.

\input{sections/qwen27_operating_points_table}

\begin{figure*}[t]
\centering
\includegraphics[width=\textwidth]{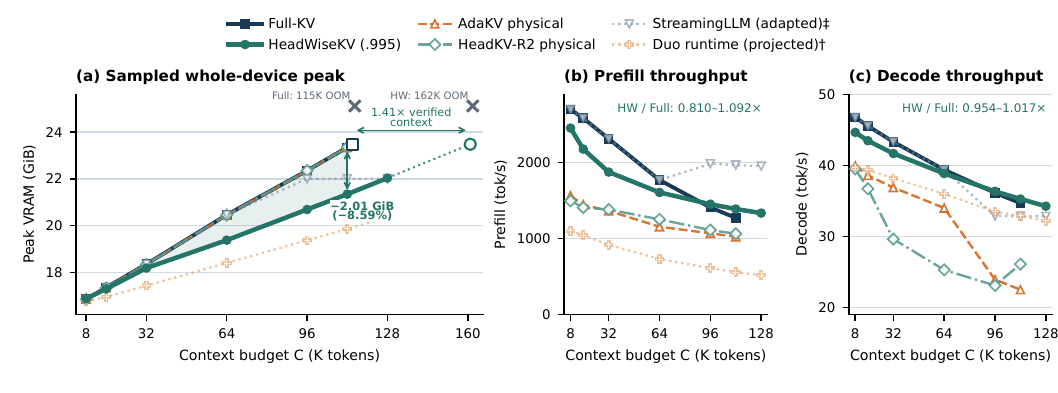}
\caption{{\textbf{Systems scaling for the fixed-model case study}
(Qwen3.6-27B Q4\_K\_M, F16 KV). Panels report \textbf{sampled peak VRAM},
\textbf{prefill throughput}, and \textbf{decode throughput} across context
lengths. Filled Full-KV and
HeadWiseKV curves form the matched cohort; the remaining baselines are
absolute campaign traces.  In Panel (a), dotted extensions show independent
capacity successes and gray crosses mark adjacent OOM tests.
For StreamingLLM (\(4+90{,}364\)), points at
\(C\leq90{,}368\) reuse the matched Full-KV measurements because no eviction
is triggered and are not independent measurements; the 96K, 112K, and 128K
points are measured with our adapted Qwen/llama.cpp fixed-cache control.}}
\label{fig:systems-efficiency-curves}
\end{figure*}

\paragraph{Memory and throughput.}
The matched Full-KV and HeadWiseKV sweep in
Figure~\ref{fig:systems-efficiency-curves} confirms that the nominal cache
reduction becomes a physical memory benefit. At 112K, HeadWiseKV lowers
sampled peak memory by 8.59\%, while prefill and decode throughput are
\(1.092\times\) and \(1.017\times\) Full-KV. Decode throughput remains close
to Full-KV across their common successful contexts, although prefill is more
context-dependent.
%%The main systems benefit is therefore
%not an isolated speedup, but a lower memory trajectory that preserves decoding
%efficiency as context grows.
In contrast, the prompt-adaptive AdaKV and HeadKV-R2 baselines make
their retention decisions after constructing the prefill cache. Their
post-prefill compaction can reduce decoding-time residency, but it does not
lower the sampled prefill peak in Figure~\ref{fig:systems-efficiency-curves}.
%This distinction explains why matching HeadWiseKV's retained fraction does
%not give these methods the same admission-memory benefit.

\paragraph{Static systems controls.}
We compare the serving behavior of StreamingLLM and the
projected DuoAttention runtime in
Figure~\ref{fig:systems-efficiency-curves}, with their 128K results summarized
in Table~\ref{tab:baseline-quality}. Once eviction is active, StreamingLLM
maintains high prefill throughput but has lower 128K decode throughput than
the reported HeadWiseKV trace. DuoAttention uses less memory at 128K, but its
prefill throughput is substantially lower. Because the controls come from
separate campaigns, they do not establish complete joint quality--systems
operating points.
HeadWiseKV provides the most complete measured balance of downstream quality,
physical memory reduction, decode throughput, and context capacity. Because
the controls come from separate campaigns, we report their measurements as
absolute values and do not claim cross-campaign speedups.

\paragraph{Maximum context.}
We compare the maximum admissible context of Full-KV and
HeadWiseKV using the adjacent-grid capacity probes in
Figure~\ref{fig:systems-efficiency-curves}. Full-KV succeeds at 114K and OOMs
at 115K, whereas HeadWiseKV succeeds at 161K and OOMs at 162K. The maximum
verified successful context is therefore \(1.41\times\) larger on the tested
1K grid.

\subsection{Evaluation Across Models and Scales}
\label{sec:compression-tradeoffs}

\paragraph{Retention sensitivity.}
We evaluate HeadWiseKV across four KV-retention levels in
Table~\ref{tab:compression-tradeoffs}. HeadWiseKV preserves Full-KV quality
over a broad range of compression levels. RULER and LoCoMo remain unchanged
at 96.00\% and 85.74\% retention. At the lowest-retention point, 68.95\%,
RULER decreases by 0.35 points and LoCoMo remains unchanged. Quality
degradation is therefore small and appears only at the strongest evaluated
compression on these benchmarks. The flat
moderate-compression region indicates that the
global KV heads contain substantial heterogeneous redundancy that can be
removed before downstream quality changes measurably.
% It also supports
%multilevel allocation over a single uniform window: heads that need long
%history can retain it while shorter windows absorb most of the compression.

%\subsection{Generalization Across Model Families and Scales}
%\label{sec:cross-model}

\input{sections/quality_table}

\paragraph{Model transfer.}
We evaluate cross-model transfer on four hybrid models in
Table~\ref{tab:quality-main}. The \(\tau=.995\) policies remain within 0.61
RULER points of Full-KV despite producing different retention rates. Transfer
is less uniform on LoCoMo. Two models remain unchanged or improve slightly,
while the other two lose at most 2.08 points. SeqCalib
therefore transfers more reliably on retrieval than on conversational
long-memory questions. The wide variation in induced
retention also confirms that \(\tau\) acts as a fidelity threshold rather than
a model-independent compression ratio. This distinction matters in
deployment. A shared threshold can express a common calibration criterion,
but capacity planning must use the policy produced for each model rather than
assume a fixed retention rate. The model-dependent LoCoMo behavior further
indicates that the final operating point should be validated on the target
workload, even when calibration similarity and retrieval results are stable.
}

%% file: sections/qwen27_baseline_table.tex
{
\begin{table*}[t]
\centering
\caption{{\textbf{Fixed-model downstream and 128K systems comparison on Qwen3.6-27B Q4\_K\_M.}
\(\tau^2\)-Bench uses no-thinking decoding. \textbf{Higher values are
better for all metrics except \(R^{Q}_{128\mathrm K}\).}
``Static'' denotes a capacity and selection rule fixed before serving, whereas
``Dynamic'' denotes request-conditioned score-based selection.
AdaKV and HeadKV-R2 use context-wise KV budgets matched to the HeadWiseKV
\(\tau=.995\) quality policy.  The final two columns report pooled throughput
at 128K over three repetitions on an RTX 4090 D; OOM denotes
failed admission in all attempts.  These systems entries are absolute
rather than matched quality--throughput comparisons. }}
\label{tab:baseline-quality}
\small
\setlength{\tabcolsep}{0pt}
\renewcommand{\arraystretch}{1.06}
\begin{tabular*}{\textwidth}{@{\extracolsep{\fill}}lcrrrrrrr@{}}
\toprule
\multirow{2}{*}{Method} &
\multirow{2}{*}{Policy} &
\multirow{2}{*}{\(\boldsymbol{R^{Q}_{128\mathrm K}}\)} &
\multirow{2}{*}{RULER} &
\multirow{2}{*}{LoCoMo} &
\multicolumn{2}{c}{\(\tau^2\)-Bench} &
\multicolumn{2}{c}{128K Throughput} \\
\cmidrule(lr){6-7}\cmidrule(l){8-9}
& & & & & Airline & Retail & Prefill (tok/s) & Decode (tok/s) \\
\midrule
Full-KV & Static & 100.00\% & 100.00 & 94.55 & 77.00 & 82.02 &
\multicolumn{2}{c}{OOM} \\
\midrule
AdaKV~\citep{adakv} & Dynamic & 68.95\% & 92.83 & 94.22 & 73.50 &
78.73 &
\multicolumn{2}{c}{OOM} \\
HeadKV-R2~\citep{headkv} & Dynamic & 68.95\% & 39.65 & 94.22 & 70.50 &
79.39 &
\multicolumn{2}{c}{OOM} \\
StreamingLLM~\citep{streamingllm} &
Static & 68.95\% &
83.88 & 82.40 &
73.27 &
80.26 &
\textbf{1952.98} & 32.81 \\
DuoAttention~\citep{duoattention} &
Static & 68.95\%  & 88.58 & 82.60 &
73.00 &
82.16 &
517.03 & 32.16 \\
\midrule
\textbf{HeadWiseKV (\(\tau=.995\))} &
Static & 68.95\% &
\textbf{99.65} &
\textbf{94.55} &
\textbf{79.50} &
\textbf{82.24} &
1334.96 & \textbf{34.24} \\
\bottomrule
\end{tabular*}
\end{table*}
}

%% file: sections/qwen27_operating_points_table.tex
{
\begin{table}[!t]
\centering
\caption{\textbf{Quality across fixed-model HeadWiseKV operating points.}
Retention is measured over configurable global KV heads at 128K.}
\label{tab:compression-tradeoffs}
\small
\setlength{\tabcolsep}{0pt}
\renewcommand{\arraystretch}{1.06}
\begin{tabular*}{\columnwidth}{@{\extracolsep{\fill}}lrrr@{}}
\toprule
Setting &
\(R^{Q}_{128\mathrm K}\) &
RULER &
LoCoMo \\
\midrule
Full-KV & 100.00\% & 100.00 & 94.55 \\
HeadWiseKV, \(\tau=.999\) & 96.00\% & 100.00 & 94.55 \\
HeadWiseKV, \(\tau=.998\) & 85.74\% & 100.00 & 94.55 \\
\textbf{HeadWiseKV, \(\boldsymbol{\tau=.995}\)} & 68.95\% & 99.65 & 94.55 \\
\bottomrule
\end{tabular*}
\end{table}
}

%% file: sections/quality_table.tex
{
\begin{table}[t]
\centering
\caption{{\textbf{Cross-model downstream quality of HeadWiseKV.} Results use
the shared \(\tau=.995\) setting with Q4\_K\_M weights. Arrows report
Full-KV \(\to\) HeadWiseKV. Because \(\tau\) induces model-specific
retention, all comparisons are within model.}}
\label{tab:quality-main}
\small
\renewcommand{\arraystretch}{1.04}

\setlength{\tabcolsep}{2pt}
\begin{tabular}{@{}lrrr@{}}
\toprule
\multirow{2}{*}{Model} &
\multirow{2}{*}{\(R^{Q}_{128\mathrm K}\)} &
RULER &
LoCoMo \\
\cmidrule(lr){3-3}\cmidrule(l){4-4}
& & Full \(\to\) HW & Full \(\to\) HW \\
\midrule
{Qwen3.6-27B} & 68.95\% & 100.00 \(\to\) 99.65 & 94.55 \(\to\) 94.55 \\
Qwen3.6-35B-A3B   & 74.38\% & 100.00 \(\to\) 99.94 & 92.99 \(\to\) 90.91 \\
Gemma4-31B        & 91.89\% & 99.96 \(\to\) 99.50 & 94.87 \(\to\) 94.94 \\
Qwen3.5-9B        & 58.01\% & 99.90 \(\to\) 99.29 & 91.30 \(\to\) 89.74 \\
\bottomrule
\end{tabular}
\end{table}
}

%% file: sections/conclusion.tex
\section{Conclusion}

We introduced HeadWiseKV, a training-free framework for
predictable physical KV residency in hybrid long-context models. The core component SeqCalib
assigns multilevel per-head suffix windows under the lower-layer decisions
used at deployment, and the grouped-cache runtime materializes this policy
without retraining. Across the evaluated models, HeadWiseKV preserves Full-KV
quality more consistently than matched-retention baselines. In the fixed-model
systems study, it reduces physical memory, extends the verified context range,
and maintains decode throughput.

%%As a prompt-independent suffix policy, HeadWiseKV cannot recover evidence
%outside its retained windows. SeqCalib does not guarantee downstream quality
%or global optimality, and new deployment regimes may require recalibration.
%Future work will combine predictable physical allocation with lightweight
%request-specific adaptation and evaluate complete multi-\(\tau\) selection
%across model families.

%% file: sections/theory_appendix.tex
\section{Design Rationale and Calibration Properties}
\label{app:theory}

\subsection{Per-Head Retention and Sequential Calibration}

Equation~\eqref{eq:budget-rd} defines the ideal allocation over per-head window
matrices. SeqCalib replaces the joint search with tractable conditional
decisions: it chooses one window for every physical KV head while processing
layers in execution order under the realized lower-layer policy. This design
reflects two properties of cache truncation: its local effect can vary across
heads, and earlier choices can change the states seen by later blocks.

\paragraph{Head-dependent truncation.}
Consider a fixed traced attention computation for one query head. Let
\(J_{\mathrm F}\) be the finite set of unmasked
full-history positions, with attention weights \(\alpha_j\ge0\),
\(\sum_{j\in J_{\mathrm F}}\alpha_j=1\), and values \(v_j\). Let the retained
suffix \(J_w\subseteq J_{\mathrm F}\) be nonempty, and write
\(m_w=\sum_{j\in J_{\mathrm F}\setminus J_w}\alpha_j\). For
\(0<m_w<1\), define the retained and omitted means
\[
 \mu_w=\frac{1}{1-m_w}\sum_{r\in J_w}\alpha_rv_r,
 \qquad
 \mu_{\bar w}=\frac{1}{m_w}
 \sum_{s\in J_{\mathrm F}\setminus J_w}\alpha_sv_s,
\]
and let
\(\Delta_V=\max_{r\in J_w,s\in J_{\mathrm F}\setminus J_w}
\lVert v_r-v_s\rVert_2\).
The full output is
\(o_{\mathrm F}=\sum_{j\in J_{\mathrm F}}\alpha_jv_j
=(1-m_w)\mu_w+m_w\mu_{\bar w}\).
Renormalizing the same logits on \(J_w\) gives \(o_w=\mu_w\), and therefore
\begin{equation}
 \lVert o_{\mathrm F}-o_w\rVert_2
 =m_w\lVert\mu_{\bar w}-\mu_w\rVert_2
 \le m_w\Delta_V.
 \label{eq:local-attention-bound}
\end{equation}
Writing
\(\mu_w-\mu_{\bar w}=
\sum_{r\in J_w}\sum_{s\in J_{\mathrm F}\setminus J_w}
\gamma_r\xi_s(v_r-v_s)\), where
\(\gamma_r=\alpha_r/(1-m_w)\) and \(\xi_s=\alpha_s/m_w\), shows that the
difference is a convex combination of cross-set value differences and hence
has norm at most \(\Delta_V\). The case \(m_w=0\) is exact, while a nonempty
suffix containing a finite unmasked logit ensures \(m_w<1\).

Both factors can vary across the calibration trace: \(m_w\) with the query and
attention pattern, and \(\Delta_V\) with the retained and omitted values of the
physical KV head. A fixed suffix can therefore produce different local
deviations across heads. SeqCalib accounts for this heterogeneity by averaging
over the query heads in each GQA group and assigning their shared physical KV
head its own window.

\paragraph{Why calibration follows execution order.}
Per-head allocation is coupled across depth: shortening an earlier cache can
change the state presented to every later block. We write this dependence as a
blockwise perturbation recurrence. For block \(b\), let \(x_b^{\mathrm F}\)
and \(x_b^a\) be the full and policy states. We view them in a common masked
state space by padding omitted history coordinates; the mask distinguishes
these coordinates from resident cache entries, leaving runtime allocation
unchanged. Let \(T_b\) and \(T_b^a\) be the corresponding block maps, so
\(x_{b+1}^{\mathrm F}=T_b(x_b^{\mathrm F})\) and
\(x_{b+1}^a=T_b^a(x_b^a)\).

Let \(\mathcal X_b\) contain the paired states encountered at block \(b\), with
both maps defined on this set. Suppose \(T_b\) is \(L_b\)-Lipschitz on
\(\mathcal X_b\), for finite \(L_b\ge0\), and assume
\[
 \epsilon_b^{\mathrm{op}}(a)
 :=\sup_{x\in\mathcal X_b}\lVert T_b(x)-T_b^a(x)\rVert_2<\infty.
\]
For \(\Delta_b=\lVert x_b^{\mathrm F}-x_b^a\rVert_2\), the triangle
inequality gives
\[
\begin{aligned}
 \Delta_{b+1}
 &=\lVert T_b(x_b^{\mathrm F})-T_b^a(x_b^a)\rVert_2\\
 &\le\lVert T_b(x_b^{\mathrm F})-T_b(x_b^a)\rVert_2
     +\lVert T_b(x_b^a)-T_b^a(x_b^a)\rVert_2\\
 &\le L_b\Delta_b+\epsilon_b^{\mathrm{op}}(a).
\end{aligned}
\]
Thus the state used by a later block depends recursively on the retention
decisions already active below it. SeqCalib follows the same dependency: it
calibrates layer \(\ell\) with the selected lower-layer windows in place, so
candidates see the prefix used at deployment.

\subsection{Calibration Score and Fixed-Prefix Selection}
\label{app:calibration}

SeqCalib compares candidate and full-history outputs with a stabilized cosine
score. For \(\epsilon_{\mathrm{cos}}=10^{-12}\), define
\[
 \widetilde u=
 \begin{cases}
 0,&\lVert u\rVert_2\le\epsilon_{\mathrm{cos}},\\
 u,&\text{otherwise},
 \end{cases}
\]
and define \(\widetilde v\) in the same way. Let \(\mathsf{xor}\) denote that
exactly one stabilized vector is zero. Then
\begin{equation}
\operatorname{cos}_{\mathrm{eval}}(u,v)=
 \begin{cases}
1,&\widetilde u=\widetilde v=0,\\
0,&(\widetilde u=0)\ \mathsf{xor}\ (\widetilde v=0),\\
\dfrac{\widetilde u^\top\widetilde v}
{\lVert\widetilde u\rVert_2\lVert\widetilde v\rVert_2},
&\text{otherwise}.
\end{cases}
\label{eq:stabilized-cosine}
\end{equation}
Equation~\eqref{eq:conditional-similarity} averages this score over the
nonempty calibration index set \(\mathcal I_{\ell h}\) defined in
Section~\ref{sec:sequential-calibration}. By definition, the full candidate
scores one, and its replay supplies the reference for both trace compatibility
and finite-window comparisons.

Fix a calibrated unit \((\ell,h)\), floor \(\tau\), and realized prefix
\(\widehat a_{<\ell,\tau}\). Consider any finite candidate
\(w\ne\mathrm{full}\) feasible in Eq.~\eqref{eq:threshold-selection}. Let
\(n=|\mathcal I_{\ell h}|\), and for \(r=(i,t,q)\in\mathcal I_{\ell h}\) set
\[
\begin{aligned}
 y_r^w
 &=\mathbf o_{\ell hq}^{(w)}
   (x_i,t\mid\widehat a_{<\ell,\tau}),\\
 y_r^{\mathrm F}
 &=\mathbf o_{\ell hq}^{(\mathrm{full})}
   (x_i,t\mid\widehat a_{<\ell,\tau}).
\end{aligned}
\]
Let \(\widetilde y_r^w,\widetilde y_r^{\mathrm F}\) be their stabilized
versions from Eq.~\eqref{eq:stabilized-cosine}, and abbreviate
\[
 \widehat S_w=\widehat S_{\ell h}
 (w\mid\widehat a_{<\ell,\tau};C,p_{\mathrm{KV}}).
\]
The definition of \(\widehat S_{\ell h}\) and feasibility of \(w\) give
\[
 \frac{1}{n}\sum_{r\in\mathcal I_{\ell h}}
 \operatorname{cos}_{\mathrm{eval}}(y_r^w,y_r^{\mathrm F})
 =\widehat S_w\ge\tau.
\]
\paragraph{Calibration-trace error bound.}
Define the largest stabilized norm \(B_w\) and the largest paired norm
difference \(\eta_w\) by
\[
\begin{aligned}
 B_w
 &=\max_{r\in\mathcal I_{\ell h}}
   \max\{\lVert\widetilde y_r^w\rVert_2,
           \lVert\widetilde y_r^{\mathrm F}\rVert_2\},\\
 \eta_w
 &=\max_{r\in\mathcal I_{\ell h}}
   \left|\lVert\widetilde y_r^w\rVert_2
          -\lVert\widetilde y_r^{\mathrm F}\rVert_2\right|.
\end{aligned}
\]
These quantities give
\begin{equation}
\begin{aligned}
 &\frac{1}{n}\sum_{r\in\mathcal I_{\ell h}}
  \lVert\widetilde y_r^w-\widetilde y_r^{\mathrm F}\rVert_2^2\\
 &\quad\le \eta_w^2+2B_w^2(1-\widehat S_w)\\
 &\quad\le \eta_w^2+2B_w^2(1-\tau).
\end{aligned}
 \label{eq:conditional-empirical-cosine-bound}
\end{equation}

\noindent\emph{Derivation.}
For nonzero stabilized vectors \(u\) and \(v\),
\[
 \lVert u-v\rVert_2^2
 =\bigl(\lVert u\rVert_2-\lVert v\rVert_2\bigr)^2
 +2\lVert u\rVert_2\lVert v\rVert_2
  \bigl(1-\operatorname{cos}_{\mathrm{eval}}(u,v)\bigr).
\]
The stabilized zero cases satisfy the same upper bound by
Eq.~\eqref{eq:stabilized-cosine}. Applying the identity pointwise and averaging
gives the first inequality in
Eq.~\eqref{eq:conditional-empirical-cosine-bound}; feasibility gives the second.

Thus every feasible finite window satisfies an in-trace squared-error bound on
stabilized outputs; the full window has zero error.
For fixed \(B_w\), increasing \(\tau\) tightens only the
angular term; it does not control \(\eta_w\).

Fix \(C\), \(p_{\mathrm{KV}}\), and \(\tau\in(0,1]\). Assume
\(1\le|\mathcal U|<\infty\), that \(\mathcal W\) is finite and contains
\(\mathrm{full}\), and that \(0<|\mathcal I_{\ell h}|<\infty\) for every
decision unit.
\begin{proposition}[Fixed-prefix minimality]
\label{prop:seqcalib}
SeqCalib is feasible and terminates after finitely many candidate evaluations.
Conditional on the realized lower-layer prefix, the vector of windows selected
at each layer has minimum nominal layer cost among same-layer choices satisfying
all local similarity floors.
\end{proposition}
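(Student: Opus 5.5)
The proof goes by induction along \(\mathcal L_{\mathrm F}\) in execution order. At each step I establish three things: the stage is feasible, it requires finitely many evaluations, and it is conditionally minimal.

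\emph{Feasibility and termination.} For each layer \(\ell\) and head \(h\in\mathcal H_\ell\), the full candidate reproduces the conditional full-history reference exactly, so \(\mathbf o^{(\mathrm{full})}_{n\ell hqt}=\mathbf o^{(\mathrm{full})}_{n\ell hqt}\). By Eq.~\eqref{eq:stabilized-cosine}, every summand \(c^{(\mathrm{full})}_{n\ell hqt}\) then equals one. The zero case gives one by definition, and a nonzero vector has cosine one with itself. Because \(\mathcal I_{\ell h}\) is nonempty and finite, Eq.~\eqref{eq:conditional-similarity} gives \(\widehat S_{\ell h}(\mathrm{full}\mid\cdot)=1\ge\tau\) for all \(\tau\in(0,1]\). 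The feasible set in Eq.~\eqref{eq:threshold-selection} is therefore a nonempty subset of the finite set \(\mathcal W\). It has a minimum-cost element, and \(\prec_{\mathcal W}\) makes that element unique. The prefix \(\widehat a_{<\ell,\tau}\) is well defined by the induction hypothesis, so the stage is well posed.

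Each stage evaluates at most \(|\mathcal H_\ell|\,|\mathcal W|\) candidate scores. Each score is a finite average over \(|\mathcal I_{\ell h}|<\infty\) indices, and each index is a finite sum over \(\mathcal J_t(w)\). Summing over the finitely many layers, since \(|\mathcal U|<\infty\), gives a finite total.

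\emph{Conditional minimality.} Fix \(\ell\) and the realized prefix. The key observation is that every score \(\widehat S_{\ell h}(w\mid\widehat a_{<\ell,\tau})\) depends only on the prefix, on the head \(h\), and on \(w\). It does not depend on the windows chosen for the other heads \(h'\) in the same layer. This holds because all heads are replayed from one full-history trace of layer \(\ell\) under the same prefix, and the output of head \(h\) in Eq.~\eqref{eq:conditional-attention} uses only its own keys and values.

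As a result, the set of same-layer vectors \((w_h)_{h\in\mathcal H_\ell}\) that meet all local floors is a Cartesian product \(\prod_h F_h\), where \(F_h=\{w:\widehat S_{\ell h}(w\mid\cdot)\ge\tau\}\). The layer cost \(\sum_h n_{\ell h,C}(w_h)\) is separable across heads. Minimizing a separable sum over a product set is the same as minimizing each summand over its own factor, which is exactly what Eq.~\eqref{eq:threshold-selection} does. Any other feasible vector \((w'_h)\) satisfies \(n_{\ell h,C}(\widehat a_{\ell h,\tau})\le n_{\ell h,C}(w'_h)\) for every \(h\), and summing these inequalities gives the claim.

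\emph{Main obstacle.} The separability step is the only substantive part. It must be justified from the method's definition that scoring a layer never activates same-layer choices: heads in one layer share the prefix but do not feed one another within that layer. I would state this explicitly as the reason the result is only a per-layer statement conditional on the prefix, and not a claim of global optimality over \(\mathcal W^{|\mathcal U|}\).
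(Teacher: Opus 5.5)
Your proposal is correct and follows essentially the same argument as the paper. The full candidate scores one, so every per-head feasible set is nonempty; finiteness of \(\mathcal W\) and \(\mathcal U\) gives termination; and the additive layer cost is minimized over the product of per-head feasible sets, with your induction along execution order and explicit stabilized-cosine case check only elaborating the paper's brief steps.
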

\noindent\emph{Proof.}
For head \(h\) in layer \(\ell\), let \(\mathcal F_{\ell h}\subseteq\mathcal W\)
be the windows satisfying its conditional similarity floor under the fixed
lower prefix. The full candidate has score one, so every
\(\mathcal F_{\ell h}\) is nonempty. The codebook and decision-unit set are
finite, which proves termination. With the lower prefix and current-layer
tensors fixed, each constraint depends only on its own head window and the
nominal layer cost is additive:
\[
 N_{\ell,C}(a_\ell;p_{\mathrm{KV}})
 =\sum_{h\in\mathcal H_\ell}
 n_{\ell h,C}(a_{\ell h};p_{\mathrm{KV}}).
\]
The feasible same-layer set is therefore
\(\prod_{h\in\mathcal H_\ell}\mathcal F_{\ell h}\). Equation~\eqref{eq:threshold-selection}
minimizes each summand over its corresponding factor, so summing the per-head
inequalities proves minimum feasible layer cost. The order
\(\prec_{\mathcal W}\) selects a deterministic representative among equal-cost
minimizers.\hfill\(\square\)

\begin{corollary}[Cost relative to a shared window]
\label{cor:shared-window}
For a SeqCalib run at \((C,p_{\mathrm{KV}},\tau)\), fix layer \(\ell\) and its
realized lower-layer prefix \(\widehat a_{<\ell,\tau}\). If a shared window
\(w\in\mathcal W\) satisfies
\[
\widehat S_{\ell h}
(w\mid\widehat a_{<\ell,\tau};C,p_{\mathrm{KV}})\ge\tau
\qquad\text{for every }h\in\mathcal H_\ell,
\]
then
\[
\sum_{h\in\mathcal H_\ell}
n_{\ell h,C}(\widehat a_{\ell h,\tau};p_{\mathrm{KV}})
\le
\sum_{h\in\mathcal H_\ell}
n_{\ell h,C}(w;p_{\mathrm{KV}}).
\]
The inequality is strict if at least one head admits a feasible candidate
whose nominal cost is strictly lower than that of \(w\).
\end{corollary}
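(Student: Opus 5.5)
The plan is to derive the corollary directly from Proposition~\ref{prop:seqcalib}, which already gives minimality of the layer cost over the product of per-head feasible sets. Fix the run \((C,p_{\mathrm{KV}},\tau)\), the layer \(\ell\), and the realized prefix \(\widehat a_{<\ell,\tau}\). As in the proof of the proposition, let \(\mathcal F_{\ell h}\subseteq\mathcal W\) be the windows whose conditional score under this prefix is at least \(\tau\).

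The hypothesis says exactly that \(w\in\mathcal F_{\ell h}\) for every \(h\in\mathcal H_\ell\). Hence the shared assignment \((w,\ldots,w)\) lies in \(\prod_{h\in\mathcal H_\ell}\mathcal F_{\ell h}\), the feasible same-layer set. The scores are all computed under the same frozen prefix and the same traced tensors that SeqCalib used, so this is the very set over which Proposition~\ref{prop:seqcalib} asserts minimality. Applying that proposition gives the non-strict inequality. More elementarily, Eq.~\eqref{eq:threshold-selection} yields \(n_{\ell h,C}(\widehat a_{\ell h,\tau};p_{\mathrm{KV}})\le n_{\ell h,C}(w;p_{\mathrm{KV}})\) for each \(h\) separately, since \(w\) is a feasible competitor in that head's argmin. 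Summing over \(h\) then gives the claim.

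For strictness, suppose some head \(h_0\) admits \(w'\in\mathcal F_{\ell h_0}\) with \(n_{\ell h_0,C}(w';p_{\mathrm{KV}})<n_{\ell h_0,C}(w;p_{\mathrm{KV}})\). Since \(\widehat a_{\ell h_0,\tau}\) minimizes cost over \(\mathcal F_{\ell h_0}\), its cost is at most that of \(w'\), and hence strictly below that of \(w\). Adding this strict per-head inequality to the weak inequalities for the other heads makes the sum strict.

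The only point needing care is that "feasible" in the strictness clause must mean feasible under the same realized prefix \(\widehat a_{<\ell,\tau}\) and floor \(\tau\). I would state this explicitly. I would also note that ties broken by \(\prec_{\mathcal W}\) do not affect costs, so the argument is otherwise routine.
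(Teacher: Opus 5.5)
Your proposal is correct and matches the paper's proof: the paper likewise notes that the shared window \(w\) lies in every per-head feasible set, so Eq.~\eqref{eq:threshold-selection} gives a per-head cost no larger than that of \(w\), and it sums these inequalities. It obtains strictness the same way you do, from one head having a strictly cheaper feasible candidate.
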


\noindent\emph{Proof.}
The shared candidate \(w\) belongs to every per-head feasible set.
Equation~\eqref{eq:threshold-selection} therefore selects a candidate whose
cost is no larger for each head. Summing these inequalities proves the claim;
a strictly lower feasible cost for any head makes the sum strict.\hfill\(\square\)

\paragraph{Finite-grid budget selection.}
Fix \(C\) and \(p_{\mathrm{KV}}\) with
\(N_C(a^{\mathrm F};p_{\mathrm{KV}})>0\), so the normalized rate is defined,
and take a nonempty ordered grid
\(\mathcal T=(\tau_1,\ldots,\tau_G)\), where \(G\ge1\) and
\(\tau_g\in(0,1]\). Let \(\widehat a^{(g)}\) be the independent SeqCalib run
at \(\tau_g\),
\(\widehat\beta_g=R_C(\widehat a^{(g)};p_{\mathrm{KV}})\), and set
\[
\widehat\Phi_g=
\min_{(\ell,h)\in\mathcal U}
\widehat S_{\ell h}\!\left(
\widehat a_{\ell h}^{(g)}
\mid\widehat a_{<\ell}^{(g)};C,p_{\mathrm{KV}}\right).
\]
Associate run \(g\) with the record
\((g,\tau_g,\widehat a^{(g)},\widehat\beta_g,\widehat\Phi_g)\), and collect
the records in \(\mathbf O_{\mathcal T}\). The matrix determines its rate,
while \(\tau_g\), \(\widehat\Phi_g\), and \(g\) identify the calibration
operating point used for offline selection; deployment uses the chosen matrix.
Given budget \(\beta\), define
\(\mathcal G_\beta^{\mathrm{feas}}
=\{g:\widehat\beta_g\le\beta\}\). If
\(\mathcal G_\beta^{\mathrm{feas}}\) is empty, the recorded grid contains no
operating point satisfying the budget. Otherwise select, in lexicographic order,
\begin{equation}
\begin{aligned}
 \Phi^\star&=\max_{g\in\mathcal G_\beta^{\mathrm{feas}}}\widehat\Phi_g,\\
 \widehat\beta^\star
 &=\min\{\widehat\beta_g:g\in\mathcal G_\beta^{\mathrm{feas}},\
                    \widehat\Phi_g=\Phi^\star\},\\
 g^\star
 &=\min\{g:g\in\mathcal G_\beta^{\mathrm{feas}},\
                    \widehat\Phi_g=\Phi^\star,\
                    \widehat\beta_g=\widehat\beta^\star\},\\
 \widehat a_\beta&=\widehat a^{(g^\star)}.
\end{aligned}
 \label{eq:budget-selector}
\end{equation}
Each \(\widehat\Phi_g\) is the lowest selected-head mean similarity evaluated
under run \(g\)'s realized lower-layer prefix. Since changing \(\tau_g\) can
alter early prefixes, later scores, and the resulting rate, the selector
enumerates the grid explicitly.

\subsection{Predictive Distortion under Log Loss}
\label{app:predictive-interpretation}

Equation~\eqref{eq:budget-rd} leaves the end-to-end distortion \(d_C\)
abstract. Under teacher-relative log loss, it has a concrete predictive
interpretation \citep{cover_thomas,courtade_weissman}. Fix \(C\),
\(p_{\mathrm{KV}}\), a common nonempty finite set of prediction positions
\(\mathcal T_C\), and a fixed position distribution
\(\pi_C\in\Delta(\mathcal T_C)\). Let \(\mathcal V\) be the finite vocabulary,
and let \(\Delta^\circ(\mathcal V)\) denote its strictly positive probability
simplex.

For input \(x\) and position \(t\), let \((X_t,S_t)\) be the prefix history
and common side information, including the position, determined by \((x,t)\).
Let \(Z_{a,t}=g_a(X_t,S_t)\) be the state obtained by rerunning policy \(a\).
The full and policy predictors are
\(p_{\mathrm F}(\cdot\mid X_t,S_t)\) and
\(q_a(\cdot\mid Z_{a,t},S_t)\), both in \(\Delta^\circ(\mathcal V)\). Take
\[
 \mathcal Y_C=\bigl(\Delta^\circ(\mathcal V)\bigr)^{\mathcal T_C}
\]
and define
\[
 d_C(\{r_t\},\{s_t\})
 =\sum_{t\in\mathcal T_C}\pi_{C,t}
   D_{\mathrm{KL}}(r_t\Vert s_t).
\]
This distortion is finite and nonnegative. With
\[
\begin{aligned}
 F^{\mathrm F}_{C,p_{\mathrm{KV}}}(x)
 &=\{p_{\mathrm F}(\cdot\mid X_t,S_t)\}_{t\in\mathcal T_C},\\
 F_{a,C,p_{\mathrm{KV}}}(x)
 &=\{q_a(\cdot\mid Z_{a,t},S_t)\}_{t\in\mathcal T_C},
\end{aligned}
\]
we obtain, with expectations interpreted in \([0,\infty]\),
\[
\begin{aligned}
 D_{\log}(a)
 &:=\mathbb E_{x\sim P_C}d_C\!\bigl(
 F^{\mathrm F}_{C,p_{\mathrm{KV}}}(x),
 \\[-1mm]
 &\hspace{32mm}F_{a,C,p_{\mathrm{KV}}}(x)\bigr)\\
 &=\mathbb E_{\substack{x\sim P_C\\J\sim\pi_C}}
 D_{\mathrm{KL}}\!\left(
 p_{\mathrm F}(\cdot\mid X_J,S_J)\Vert
 q_a(\cdot\mid Z_{a,J},S_J)\right).
\end{aligned}
\]

To interpret this distortion, draw
\(Y\sim p_{\mathrm F}(\cdot\mid X_J,S_J)\) and let
\(P_{\mathrm F}^a\) denote the resulting joint law. Abbreviate
\((X,S,Z_a)=(X_J,S_J,Z_{a,J})\), and define
\[
 \bar p_{\mathrm F}^a(y\mid z,s)=
 \mathbb E_{P_{\mathrm F}^a}
 [p_{\mathrm F}(y\mid X,S)\mid Z_a=z,S=s].
\]
The conditional relative-entropy chain rule gives
\[
\begin{aligned}
 D_{\log}(a)
 &=I_{P_{\mathrm F}^a}(Y;X\mid Z_a,S)\\
 &\quad+\mathbb E_{P_{\mathrm F}^a}D_{\mathrm{KL}}\!\left(
 \bar p_{\mathrm F}^a(\cdot\mid Z_a,S)
 \Vert q_a(\cdot\mid Z_a,S)\right).
\end{aligned}
\]
Thus the operational distortion separates two effects of compression:
predictive information absent from the policy-visible state and residual
mismatch after conditioning on that state.
This decomposition interprets the ideal objective; SeqCalib
instead uses the local activation criterion in
Eq.~\eqref{eq:conditional-similarity}. The budgeted objective trades these
effects against nominal configurable-KV storage.

%% file: sections/evaluation_details.tex
\section{Supplementary Evaluation}

\subsection{Experimental Protocol}
\label{app:evaluation-protocol}

\paragraph{Calibration and quality.}
Calibration uses FlashAttention and stores the artifact identifier, seed, and
final policy matrix for each run. The full-window check described in
Section~\ref{app:deployment} aborts a run when its cosine similarity falls
below 0.99.
RULER~\citep{ruler} uses the tasks
\url{niah_single_1}, \url{niah_single_2},
\url{niah_multikey_1}, \url{niah_multivalue}, and
\url{niah_multiquery} at \(\{4,8,16,32,64,128\}\)K. Using the official code
with seed 42, we generate 40 examples per task--context cell, or 1,200 per
configuration. LoCoMo~\citep{locomo} covers categories
1--4 from all ten conversations; a configuration enters the results only after
all 1,540 generations and all 1,540 judgments complete successfully.

\paragraph{Systems measurements.}
The Qwen3.6-27B study uses F16 K/V, FlashAttention, full GPU offload, a logical
batch size of 2048, and a physical microbatch size of 512, with automatic batch
fitting and built-in warm-up disabled. Of 128 generated tokens, 127 post-prompt
decode intervals are timed. Each successful method--context pair has three
fresh-server repetitions; throughput is pooled as total timed tokens divided by
total measured time, and memory is the median of the three device-wide
whole-invocation peaks. Separate 1K-token capacity scans vary only context
length, holding the model, policy, runtime build, GPU, and request format fixed.
We repeat the last success and first OOM three times to confirm the 1K-grid
capacity boundary.

\paragraph{Baseline controls.}
StreamingLLM uses a 4-token sink and a 90,364-token recent window. Its
resulting 90,368-token cache matches 68.95\% retention only at the 128K
reference context. Eviction begins above 90,368 tokens, so the curve reuses
matched Full-KV measurements until
independent adapted-control runs begin at 96K. Full-KV and HeadWiseKV use a
counterbalanced run order. AdaKV and HeadKV-R2 compact after prefill, whereas
StreamingLLM and adapted DuoAttention preallocate compressed caches. Only the
matched Full-KV/HeadWiseKV cohort yields systems ratios; other baselines are
reported as absolute measurements. Section~\ref{app:deployment} specifies the
StreamingLLM and DuoAttention adaptations.

%% file: sections/qwen35_9b_throughput_figure.tex
\begin{figure*}[t]
\centering
\includegraphics[width=\textwidth]{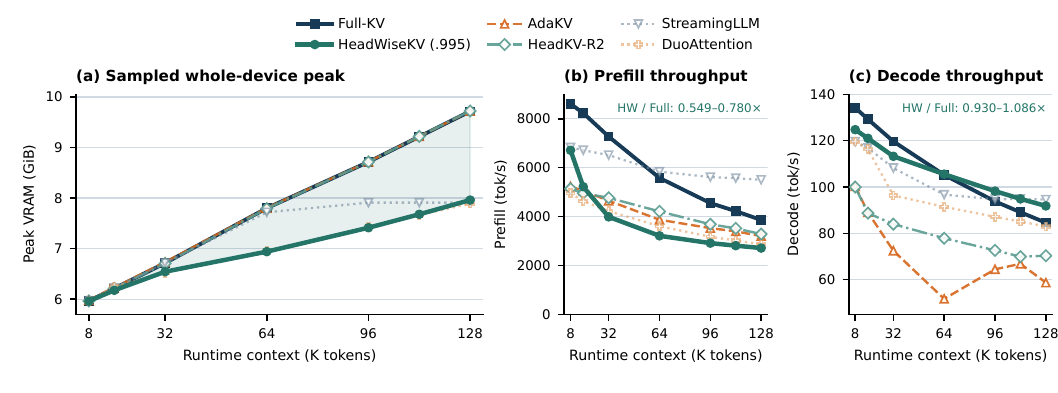}
\caption{\textbf{Single-run context sweep for Qwen3.5-9B Q4\_K\_M on one
RTX 4090 D.} The panels report sampled
whole-invocation peak VRAM, prefill throughput, and post-prompt decode
throughput at 8, 16, 32, 64, 96, 112, and 128K runtime slots. Each point is
one fresh-server invocation that generates 128 tokens, so the curves have no
error bars. HeadWiseKV uses \(\tau=.995\). StreamingLLM at 8--64K and DuoAttention
at 8--16K reserve 512 tokens for the fixed-cache \texttt{ubatch}, so these are
independently measured compressed-layout points.}
\label{fig:qwen35-9b-throughput}
\end{figure*}

%% file: sections/runtime_implementation.tex
\subsection{Runtime Realization and Deployment}
\label{app:deployment}

\paragraph{Policy and calibration.}
The runtime consumes a static SeqCalib matrix indexed by configurable
full-attention layer and physical KV head. Each entry selects either full
history or a positive local window; inference never recomputes the matrix from
the prompt, attention scores, or request history. Each reported operating point
deploys its individually calibrated matrix; the outer ordered-grid selector was
not used in these experiments.

We implement the same runtime contract for Qwen and Gemma. Each backend loads
the layer--KV-head window matrix and exposes the post-transform \(Q/K/V\)
tensors together with the conditional full-history pre-gate output required by
calibration. For reproducibility, the reported Qwen runs use the buildable
\texttt{llama.cpp} source tree
{\footnotesize\url{src_runtime/llama.cpp-9a532ae-kvmemfix}}.

Calibration and serving share the same layer--head indexing and window
semantics. At layer \(\ell\), the calibrator activates
\(\widehat a_{<\ell}\), keeps the current layer at full history, and traces its
post-transform \(Q/K/V\) tensors and conditional full-history pre-gate output.
Candidate suffixes are reconstructed from these arrays; selected entries remain
active in higher-layer traces and in the final server configuration. The
full-window replay described in Section~\ref{app:evaluation-protocol} checks
tensor layout, head mapping, and attention scaling against
Eq.~\eqref{eq:conditional-attention}. Differences in kernel precision or
accumulation order can make candidates near \(\tau\) path-sensitive.

\paragraph{Physical cache realization.}
The runtime assumes known full-attention layers, a mapping from physical KV
heads to GQA query-head groups, and a window codebook representable as cache
groups.
\newpage
Whereas the baseline manages residency by layer, HeadWiseKV partitions
configurable full-attention storage into active
\((\mathrm{window},\mathrm{KV\ head})\) groups. Layers using the same window at
a given KV-head index reuse the group type and capacity while retaining
disjoint, layer-specific K/V entries; unassigned layers and heads allocate no
entries in that group. Native local and recurrent layers retain their original
cache path.

Each configured KV head reads and writes only its assigned K/V slice. A full
entry uses a full-length single-head group, while a finite entry uses the
corresponding local-window group. Each GQA query-head group follows its mapped
KV head. Per-head attention outputs are concatenated before the unchanged
gating and output projection, so the prompt is not globally truncated. Full-KV
and HeadWiseKV use identical model weights and differ only in cache layout and
graph routing. Porting requires per-head K/V access and preservation of the GQA
mapping.

\paragraph{Accounting and adapted controls.}
Normalized retention counts token--head entries only in the configurable
full-attention layers; weights, workspaces, allocator effects, and
non-attention state remain outside this accounting. Peak VRAM and confirmed
context capacity are therefore measured on the complete runtime invocation.
The adapted StreamingLLM control preallocates its sink--recent cache and retains
the original Qwen RoPE positions rather than applying the official
cache-relative position shift. The DuoAttention control uses a fixed binary
retrieval/streaming-head projection rather than the official learned gate.
These adaptations define the two policy-level controls used in the reported
systems traces.

%% file: sections/longmemeval_results.tex
\subsection{Qwen3.5-9B Quality and Systems Results}
\label{app:longmemeval}

\paragraph{LongMemEval-S quality and memory.}
Table~\ref{tab:longmemeval-qwen35-9b} compares Qwen3.5-9B Q4\_K\_M on the
cleaned 500-question LongMemEval-S set~\citep{longmemeval}. All configurations
completed the 500 questions with thinking disabled, temperature 0, and seed 42.
Inputs exceeding 122,880 tokens after template application retain equal-sized
content-token prefixes and suffixes; outputs are capped at 8,192 tokens. A fixed
DeepSeek V4 Pro binary judge scored every answer. We audited all labels under a
lenient contains-the-correct-answer rule and changed only unambiguous false
positives or false negatives. Reviewed ACC is the accuracy computed from these
audited labels. Here, \(R^{Q}_{128\mathrm K}\) is the average
configurable query-head KV retention at the 131,072-token reference context,
while \(\tau\) denotes HeadWiseKV's calibration-fidelity threshold.

\begin{table}[t]
\centering
\caption{\textbf{LongMemEval-S accuracy and peak memory for Qwen3.5-9B
Q4\_K\_M.} Reviewed ACC is the post-audit score from the shared 500-question
protocol. Peak VRAM comes from a separate standardized single-RTX-4090-D
workload with
a 130,943-token prompt and 128 generated tokens in a 131,072-token runtime
slot; each value is the median of three fresh-server, whole-invocation
physical-device peaks. Arrows indicate the preferred direction. Full-KV is
the uncompressed reference.}
\label{tab:longmemeval-qwen35-9b}
\footnotesize
\setlength{\tabcolsep}{0.9pt}
\renewcommand{\arraystretch}{1.06}
\begin{tabular}{@{}lrrr@{}}
\toprule
Method &
\shortstack{\(R^{Q}_{128\mathrm K}\)\\(\%) \(\downarrow\)} &
\shortstack{Reviewed\\ACC (\%) \(\uparrow\)} &
\shortstack{Peak VRAM\\(MiB) \(\downarrow\)} \\
\midrule
Full-KV & 100.00 & 70.2 & 9947 \\
\midrule
AdaKV~\citep{adakv} & 53.71 & 54.2 & 9951 \\
HeadKV-R2~\citep{headkv} & 53.71 & 43.6 & 9955 \\
StreamingLLM~\citep{streamingllm} & 53.71 & 12.8 & 8099 \\
DuoAttention~\citep{duoattention} & 53.71 & 14.8 & 8081 \\
\midrule
HeadWiseKV (\(\tau=.999\)) & 80.27 & 70.6 & 9225 \\
HeadWiseKV (\(\tau=.995\)) & 53.71 & 47.0 & 8151 \\
HeadWiseKV (\(\tau=.990\)) & 26.76 & 25.6 & 7053 \\
\bottomrule
\end{tabular}
\end{table}

At the shared \(R^{Q}_{128\mathrm K}=53.71\%\) point, AdaKV records 54.2\%
Reviewed ACC, followed by HeadWiseKV at 47.0\% and HeadKV-R2 at 43.6\%.
HeadWiseKV at
\(\tau=.999\) reaches 70.6\% while retaining 80.27\% of configurable KV,
compared with 70.2\% for Full-KV.
The 0.4-point difference corresponds to two questions and does not establish a
ranking between those configurations under this audit.

AdaKV and HeadKV-R2 compact only after full-context prefill, so their
whole-invocation peaks remain close to Full-KV even though their decode-steady
peaks are 8109 and 7529 MiB, respectively. HeadKV-R2's 53.71\% entry is its
configured query-head budget equivalent; its unioned physical payload retention
is 39.44\%.

\paragraph{Single-GPU context sweep.}
At 128K, HeadWiseKV (\(\tau=.995\)) uses 7.96 GiB versus Full-KV's 9.71 GiB
sampled peak, while its observed prefill/decode throughputs are 2714/91.8
tok/s versus 3862/84.6 tok/s. Across all seven contexts, HeadWiseKV attains
0.549--0.780 of Full-KV prefill throughput and 0.930--1.086 of Full-KV decode
throughput. StreamingLLM's fixed physical allocation plateaus near 7.91 GiB
from 96K onward and reaches 5503/94.5 prefill/decode tok/s at 128K. These curves
isolate systems behavior, while Table~\ref{tab:longmemeval-qwen35-9b} provides
the separate quality comparison and repeated 128K memory campaign.

%% file: sections/limitations.tex
\subsection{Limitations}
\label{app:limitations}

HeadWiseKV fixes a prompt-independent window matrix, so it cannot recover
evicted evidence or adapt to requests with atypical long-range dependencies.
SeqCalib selects profiles using in-sample reconstruction cosine. Profiles must
be revalidated when the model, cache format, or target context changes, and new
workloads require task-specific validation; material shifts may also require
recalibration. Because selection is layerwise, the resulting profile need not
minimize a global allocation objective. Quality conclusions cover only the
reported tasks and operating
points, while aggregate scores do not support paired per-example uncertainty
analysis. Memory and throughput depend on the tested hardware, backend, batch
settings, quantization, and context length, and token--head retention does not
by itself determine allocated bytes or peak VRAM. Because the baseline controls
use different cache lifecycles and the adaptations described in
Section~\ref{app:deployment}, their absolute measurements characterize only
these implementations.